\newcommand*\circled[1]{\tikz[baseline=(char.base)]{
            \node[shape=circle,draw,inner sep=2pt] (char) {#1};}}
\newtheorem{theorem}{Theorem}[section]
\newtheorem{proposition}{Proposition}[section]
\newcommand{\wg}[1]{{\color{black}#1}}
\icmltitlerunning{Interpreting diffusion score matching using normalizing flow}
\begin{document}
\newcommand{\bo}{\omega}
\newcommand{\KL}{\text{KL}}
\newcommand{\train}{\text{train}}
\newcommand{\D}{\mathcal{D}}
\newcommand{\softmax}{\text{Softmax}}

\newcommand{\logsumexp}{\text{log-sum-exp}}

\newcommand{\R}{\mathbb{R}}
\newcommand{\N}{\mathcal{N}}
\newcommand{\cL}{\mathcal{L}}
\newcommand{\cO}{\mathcal{O}}
\newcommand{\svert}{~|~}
\newcommand{\td}{\text{d}}
\newcommand{\f}{\mathbf{f}}
\newcommand{\x}{\bm{x}}
\newcommand{\Bb}{\mathbf{b}}
\newcommand{\sBb}{\mathtt{z}}
\newcommand{\bx}{\overline{\x}}
\newcommand{\bb}{\overline{b}}
\newcommand{\y}{\mathbf{y}}
\newcommand{\z}{\bm{z}}
\newcommand{\bk}{\mathbf{k}}
\newcommand{\w}{\mathbf{w}}
\newcommand{\W}{\mathbf{W}}
\newcommand{\ba}{\mathbf{a}}
\newcommand{\m}{\mathbf{m}}
\newcommand{\ls}{\mathbf{l}}
\newcommand{\bL}{\mathbf{L}}
\newcommand{\A}{\mathbf{A}}
\newcommand{\X}{\mathbf{X}}
\newcommand{\Y}{\mathbf{Y}}
\newcommand{\F}{\mathbf{F}}
\newcommand{\M}{\mathbf{M}}
\newcommand{\p}{\mathbf{p}}
\newcommand{\bp}{\overline{\p}}
\newcommand{\bz}{\mathbf{0}}
\newcommand{\bepsilon}{\text{\boldmath$\epsilon$}}
\newcommand{\bgamma}{\text{\boldmath$\gamma$}}
\newcommand{\s}{\mathbf{s}}
\newcommand{\Unif}{\text{Unif}}
\newcommand{\boh}{\widehat{\text{\boldmath$\omega$}}}
\newcommand{\bsigma}{\text{\boldmath$\sigma$}}
\newcommand{\bSigma}{\text{\boldmath$\Sigma$}}
\newcommand{\bmu}{\text{\boldmath$\mu$}}
\newcommand{\bphi}{\text{\boldmath$\phi$}}
\newcommand{\K}{\mathbf{K}}
\newcommand{\Kh}{\widehat{\mathbf{K}}}
\newcommand{\Cov}{\text{Cov}}
\newcommand{\Var}{\text{Var}}
\newcommand{\tr}{\text{tr}}
\newcommand{\tdet}{\text{det}}
\newcommand{\diag}{\text{diag}}
\newcommand{\ind}{\mathds{1}}
\newcommand{\bc}{\mathbf{c}}
\newcommand{\reg}{\eta}
\newcommand{\weightdecay}{\lambda}
\newcommand{\h}{\mathbf{h}}

\newcommand{\mparam}{\bm{\theta}}	
\newcommand{\vparam}{\bm{\phi}}	

\newcommand{\hparam}{\bm{\varphi}}
\newcommand{\Xb}{\mathbb{X}}
\newcommand{\hgrad}{\overline{\nabla_{\x} \h}}
\newcommand{\Hmatrix}{\mathbf{H}}
\newcommand{\Grad}{\mathbf{G}}
\newcommand{\g}{\bm{g}}
\newcommand{\noise}{\bm{\epsilon}}
\newcommand{\data}{\mathcal{D}}
\twocolumn[
\icmltitle{Interpreting diffusion score matching using normalizing flow}



\icmlsetsymbol{equal}{*}

\begin{icmlauthorlist}
\icmlauthor{Wenbo Gong}{equal,CAM}
\icmlauthor{Yingzhen Li}{equal,IC}

\end{icmlauthorlist}

\icmlaffiliation{CAM}{Department of Engineering, University of Cambridge, Cambridge, UK}
\icmlaffiliation{IC}{Department of Computing, Imperial College London, London, UK}

\icmlcorrespondingauthor{Wenbo Gong}{wg242@cam.ac.uk}

\icmlkeywords{Machine Learning, ICML}

\vskip 0.3in
]



\printAffiliationsAndNotice{\icmlEqualContribution} 

\begin{abstract}
Scoring matching (SM), and its related counterpart, Stein discrepancy (SD) have achieved great success in model training and evaluations. However, recent research shows their limitations when dealing with certain types of distributions. One possible fix is incorporating the original score matching (or Stein discrepancy) with a diffusion matrix, which is called \emph{diffusion score matching} (DSM) (or \emph{diffusion Stein discrepancy} (DSD)) . However, the lack of the interpretation of the diffusion limits its usage within simple distributions and manually chosen matrix. In this work, we plan to fill this gap by interpreting the diffusion matrix using normalizing flows. Specifically, we theoretically prove that DSM (or DSD) is equivalent to the original score matching (or score matching) evaluated in the transformed space defined by the normalizing flow, where the diffusion matrix is the inverse of the flow's Jacobian matrix. In addition, we also build its connection to Riemannian manifolds, and further extend it to continuous flows, where the change of DSM is characterized by an ODE. 
\end{abstract}
\section{Introduction}
\label{Sec: Introduction}
Recently, score matching \citep{hyvarinen2005estimation} and its closely related counterpart, Stein discrepancy \citep{gorham2017measuring} have made great progress in both understanding their theoretical properties and practical usage. Particularly, unlike Kullback–Leibler (KL) divergence which can only be used for distributions with known normalizing constant, SM (or SD) can be evaluated for unnormalized densities, and requires fewer assumptions for the probability distributions \citep{fisher2021measure}. Such useful properties enable them to be widely applied in training energy-based model (EBM) \citep{song2020sliced,grathwohl2020learning,wenliang2019learning}, state-of-the-art score-based generative model \citep{song2019generative, song2020score}, statistical tests \citep{liu2016kernelized,chwialkowski2016kernel} and variational inference \citep{hu2018stein,liu2016stein}. 

Despite their elegant statistical properties, recent work \citep{barp2019minimum} demonstrated their failure when dealing with certain type of distributions (e.g. heavy-tailed distributions). For instance, when the data and the model are heavy tailed distributions, the model can fail to recover the true mode even in one dimensional case. The root of this problem is that the SM (or SD) objective is highly non-convex and does not correlate well with likelihood. To fix it, \citet{barp2019minimum} proposed a variant called \emph{diffusion score matching} (and \emph{diffusion Stein discrepancy}), where a diffusion matrix is introduced. However, the author did not provide us an interpretation of this diffusion matrix. In fact, the diffusion used by the author \citep{barp2019minimum} is manually chosen for toy densities. Such lack of interpretation hinders further development of a proper training method of the diffusion matrix.

In this paper, we aim to give an interpretation based on normalizing flows, which sheds light on developing training method for the diffusion. We summarize our contributions as follows:
\begin{itemize}
    \item We theoretically prove that DSM (or DSD) is equivalent to the original SM (or SD) performed in the transformed space defined by the normalizing flow. The diffusion matrix is exactly the same as the inverse of the flow's Jacobian matrix.
    \item We further show that its connection to Riemannian manifold. Specifically, we show the diffusion matrix is closely related to the Riemannian metric tensor.
    \item We further extend DSM to their continuous version. Namely, we derive an ODE to characterize its instantaneous change. 
\end{itemize}
We hope that by building these connections, a broad range of techniques from normalizing flow communities can be leveraged to develop training methods for the diffusion matrix. 
\section{Background: Diffusion Stein discrepancy}
\label{Sec: background}
\subsection{Score matching and Stein discrepancy}
Let $\mathcal{P}$ be the space of Borel probability measures on $\mathbb{R}^D$, $\mathbb{Q}\in\mathcal{P}$ to be a probability measure, the objective for model learning is to find a sequence of probability measures $\{\mathbb{P}_{\theta}:\theta\in\Theta\}\subset \mathcal{P}$ that approximates $\mathbb{Q}$ in an appropriate sense. One common way to achieve this is by defining a discrepancy measure $\mathcal{D}:\mathcal{P}\times\mathcal{P}\rightarrow \mathbb{R}$, which quantifies the differences between two probability measures. Thus, the optimal parameters $\theta^*$ can be obtained by $\theta^*=\text{argmin}\mathcal{D}(\mathbb{Q}||\mathbb{P}_{\theta})$. The choice of discrepancy depends on the properties of the probability measures, the efficiency and its robustness. The one we are focused on is called \emph{Fisher divergence}. Assuming for probability measures $\mathbb{Q}$ and $\mathbb{P}_\theta$, we have corresponding twice differentiable densities $q(\bm{x})$, $p_{\theta}(\bm{x})$. The \emph{Fisher divergence} \citep{johnson2004information} is defined as 
\begin{equation}
\mathcal{F}(q,p)=\frac{1}{2}\mathbb{E}_{q}[||\bm{s}_p(\bm{x})-\bm{s}_q(\bm{x})||^2]
\label{eq: Fisher divergence}
\end{equation}
where $\bm{s}_p(\bm{x})=\nabla_{\bm{x}}\log p_\theta(\bm{x})$ is called the score of $p_\theta$, and $\bm{s}_q$ is defined accordingly. Despite that $q$ is often used for underlying data densities with the intractable $\bm{s}_q$, $\bm{s}_q$ in fact acts as a constant for parameter $\theta$. Thus, one can use integration-by-part to derive the following:
\begin{equation}
    \mathcal{F}(q,p_\theta) =\underbrace{\mathbb{E}_q \left[\frac{1}{2}||\bm{s}_p(\bm{x})||^2+Tr(\nabla_{\bm{x}}\bm{s}_p(\bm{x})) \right]}_{SM(q, p_\theta)} +C_{q}
    \label{eq: Score matching}
\end{equation}
with $C_q$ a constant w.r.t.~$\theta$. This equivalent objective $SM(q, p_\theta)$ is referred as \emph{score matching} \citep{hyvarinen2005estimation}. 

Another discrepancy measure we are interested is called \emph{Stein discrepancy}, which is defined as 
\begin{equation}
    \mathcal{S}(q,p_\theta) = \sup_{\bm{f}\in\mathcal{H}}\mathbb{E}_{q}[\bm{s}_p(\bm{x})^T\bm{f}(\bm{x})+\nabla_{\bm{x}}^T\bm{f}(\bm{x})]
    \label{eq: Stein discrepancy}
\end{equation}
where $\bm{f}:\mathbb{R}^D\rightarrow\mathbb{R}^D$ is a test function, and $\mathcal{H}$ is an appropriate test function family, e.g. reproducing kernel Hilbert space \cite{liu2016kernelized,chwialkowski2016kernel} or Stein class \citep{gorham2017measuring,liu2016kernelized}. Recent work \citep{hu2018stein} proved a connection between Stein discrepancy and Fisher divergence by showing the optimal test function:
\begin{equation}
    \bm{f}^*(\bm{x})\propto \bm{s}_p(\bm{x})-\bm{s}_q(\bm{x}).
    \label{eq: optimal test function for Stein}
\end{equation}
thus we can show Stein discrepancy is equivalent to Fisher divergence up to a multiplicative constant. 

\citet{barp2019minimum,gorham2019measuring} further extend the score matching and Stein discrepancy by incorporating a diffusion matrix $\bm{m}(\bm{x}):\mathbb{R}^D\rightarrow \mathbb{R}^{D\times D}$. It starts from defining \emph{diffusion Fisher divergence} 
\begin{equation}
\mathcal{F}_{m}(q,p_\theta)=\frac{1}{2}\mathbb{E}_q[||\bm{m}(\bm{x})^T(\bm{s}_p(\bm{x})-\bm{s}_q(\bm{x}))||^2]
    \label{eq: DSM}
\end{equation}
where $\bm{m}(\bm{x})$ is a matrix-valued function. Expanding eq.~\ref{eq: DSM} and applying integration by parts (with $\bm{m}$ short-handing $\bm{m}(\bm{x})$ and $\bm{s}_p$ short-handing $\bm{s}_p(\bm{x})$):
\begin{equation}
\mathcal{F}_{m}(q,p_\theta)= \underbrace{\mathbb{E}_q \left[ \frac{1}{2} ||\bm{m}^T\bm{s}_p||^2 + \nabla^\top (\bm{m} \bm{m}^\top \bm{s}_p)\right]}_{DSM_m(q, p_{\theta})} + C_{q, m},
\label{eq: DSM_expand}
\end{equation}
where $C_{q, m}$ depends on both $q$ and $\bm{m}(\bm{x})$. Similar to the derivation of $SM(q, p_\theta)$, this also returns an alternative diffusion score matching (DSM) objective $DSM_m(q, p_{\theta})$.

Similarly, Diffusion Stein discrepancy (DSD) is defined as
\begin{equation}
\hspace{-1em}
\begin{aligned}
    &DSD_m(q,p_\theta)\\
    &= \sup_{\bm{f}\in\mathcal{H}}\mathbb{E}_q[(\bm{m}(\bm{x})^T\bm{s}_p(\bm{x}))^T\bm{f}(\bm{x})+\nabla_{\bm{x}}^T(\bm{m}(\bm{x})\bm{f}(\bm{x}))]
\end{aligned}
    \label{eq: DSD}
\end{equation}
It can be shown that as long as $\bm{m}(\bm{x})$ is invertible, $\mathcal{F}_m(q,p_\theta)$ and $DSD_m(q,p_\theta)$ are valid divergences. 
These two extensions have demonstrated superior performances when dealing with certain type of distributions. In the following, we give a motivating example similar to \citet{barp2019minimum}.

\subsection{Motivating example: Student-t distribution}
\label{subsec: motivating example}
Let assume $q$, $p_\theta$ to be 1 dimensional student-t distribution. The target is to approximate $q$ by $p_\theta$. The training set is $300$ i.i.d data sampled from $q$ with mean $0$ and scale $0.3$. We assume the scale parameter for $p_\theta$ is the same as $q$, and the only trainable parameter $\theta$ is the mean. The degree of freedom is $5$ for both $q$, $p_\theta$. 

\begin{figure*}
    \centering
    \includegraphics[scale=0.16]{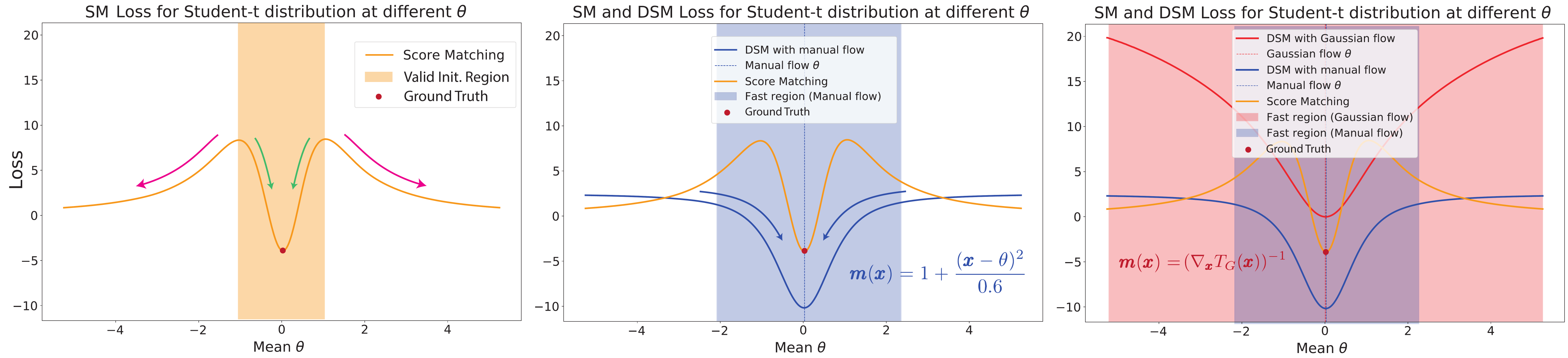}
    \caption{The $SM(q, p_\theta)$ and $DSM_m(q, p_\theta)$ losses computed with different mean parameters $\theta$. \textbf{Left}: This \textcolor{BurntOrange}{orange line} plots the vanilla SM loss between $q$ and $p_\theta$. The arrow indicates the gradient descent direction of $\theta$. The \textcolor{red}{red dot $\bullet$} is the ground truth for $\theta$. \textbf{Middle}: The \textcolor{Blue}{blue line} plots the DSM loss with \textcolor{Blue}{$\bm{m}(\bm{x})=1+\frac{(\bm{x}-\theta)^2}{0.6}$}. The \textcolor{Blue}{blue rectangle} indicates the region with large gradient descent magnitude (fast convergence). \textbf{Right}: The \textcolor{red}{red line} plots the DSM loss with Gaussian flow. The\textcolor{red}{red rectangle} indicates the fast convergence region.}
    \label{fig:Student-t}
\end{figure*}
The left panel of figure \ref{fig:Student-t} shows the score matching loss computed for different $\theta$. We can observe that for original $SM(q, p_\theta)$ loss, it is highly non-convex, and the loss value does not correlate well with likelihood. Indeed, we can see the true location $\theta = 0$ is protected by two high 'walls'. In other words, unlike maximum likelihood estimator, a parameter $\theta$ that is closer to the ground truth does not necessarily produce low SM loss. One important consequence is that unless the initialized $\theta$ is within the narrow \textcolor{BurntOrange}{valid region}, the gradient-based optimization will never recover the truth. 

On the other hand, the middle panel of figure \ref{fig:Student-t} shows that if we chose $\bm{m}(\bm{x}) = (1+\frac{(\bm{x}-\theta)^2}{0.6})$ (Manual Flow) as the diffusion matrix, the corresponding \textcolor{Blue}{$DSM_m(q,p_\theta)$} loss is convex. The ground truth can be recovered by minimizing DSM with a proper gradient-based optimizer.  

\wg{However, this $DSM_m(q, p_\theta)$ is only a surrogate objective for learning $\theta$ because the diffusion matrix $\bm{m}$ contains $\theta$. Thus, the dropped term $C_{q, m}$ in eq.\ref{eq: DSM_expand} is no longer a constant. Although one can treat the $\theta$ in $\bm{m}$ as constant during training and ignore its contribution when taking the derivative, this is equivalent to use different losses after each $\theta$ update. We leave its convergence analysis for the future work. }

The selection of the diffusion matrix is crucial to the success of the estimator. Unfortunately, the interpretation of this matrix is unclear, not mentioning a selection algorithm. In the following, we aim to shed lights on this problem by connecting this diffusion with normalizing flows.

\section{Diffusion matrix as normalizing flow}
\label{Sec: normalizing flow}
\subsection{Interpreting DSM/DSD using normalizing flow}
Let assume we have two densities $q_X(\bm{x})$, $p_X(\bm{x})$ defined on $\mathbb{R}^D$ and are twice differentiable. We further define an differentiable invertible transformation $\bm{T}(\bm{x}):\mathbb{R}^D\rightarrow \mathbb{R}^D$:
\begin{equation}
    \bm{y}=\bm{T}(\bm{x})
\end{equation}
with the corresponding induced densities $q_Y(\bm{y})$ and $p_Y(\bm{y})$. We can prove the following theorem:
\begin{theorem}
\label{thm: DSM}
For twice differentiable densities $q_X(\bm{x})$, $p_X(\bm{x})$ and an invertible differentiable transformation $T:\mathbb{R}^D\rightarrow\mathbb{R}^D$, the diffusion Fisher divergence (Eq.\ref{eq: DSM}) is equivalent to the original Fisher divergence in $\bm{y}$ space:
\begin{equation}
    \mathcal{F}(q_Y,p_Y) = \frac{1}{2}\mathbb{E}_{q_Y}[||\bm{s}_{p_Y}(\bm{y})-\bm{s}_{q_Y}(\bm{y})||^2]
\end{equation}
where $\bm{y}=T(\bm{x})$, and $p_Y$, $q_Y$ are corresponding densities after the transformation. The diffusion matrix $\bm{m}(\bm{x})$ is the inverse of the Jacobian matrix $(\nabla_{\bm{x}}\bm{T}(\bm{x}))^{-1}$ 
\end{theorem}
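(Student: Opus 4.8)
The plan is to derive the identity directly from the change-of-variables formula for densities under the flow $T$, combined with the chain rule for the score. The crucial structural observation I will exploit is that both $p$ and $q$ are pushed through the \emph{same} transformation $T$, so that the Jacobian-determinant correction term attached to each transformed score is identical and will cancel once I form the difference $\bm{s}_{p_Y}-\bm{s}_{q_Y}$. This is what ultimately collapses the two Jacobian-weighted scores into a single factor $(\nabla_{\x}T)^{-\top}$ acting on the original score difference.

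First I would write the change of variables for the log-densities. Since $\bm{y}=T(\x)$, the induced densities satisfy $\log p_Y(T(\x)) = \log p_X(\x) - \log|\det\nabla_{\x}T(\x)|$, and identically for $q$. Differentiating with respect to $\x$ and applying the chain rule on the left gives
\begin{equation}
(\nabla_{\x}T(\x))^\top \bm{s}_{p_Y}(\bm{y}) = \bm{s}_{p_X}(\x) - \nabla_{\x}\log|\det\nabla_{\x}T(\x)|,
\end{equation}
so that $\bm{s}_{p_Y}(\bm{y}) = (\nabla_{\x}T(\x))^{-\top}[\bm{s}_{p_X}(\x) - \nabla_{\x}\log|\det\nabla_{\x}T(\x)|]$, with the analogous identity holding for $q$.

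The second step, and the heart of the argument, is to subtract these two expressions. Because the term $\nabla_{\x}\log|\det\nabla_{\x}T(\x)|$ depends only on $T$ and not on the density, it is common to both scores and cancels:
\begin{equation}
\bm{s}_{p_Y}(\bm{y}) - \bm{s}_{q_Y}(\bm{y}) = (\nabla_{\x}T(\x))^{-\top}\big[\bm{s}_{p_X}(\x) - \bm{s}_{q_X}(\x)\big].
\end{equation}
Identifying $\bm{m}(\x) = (\nabla_{\x}T(\x))^{-1}$ yields $\bm{m}(\x)^\top = (\nabla_{\x}T(\x))^{-\top}$, so the right-hand side is exactly $\bm{m}(\x)^\top(\bm{s}_{p_X}(\x)-\bm{s}_{q_X}(\x))$, matching the integrand of Eq.~\ref{eq: DSM}, and hence $\|\bm{s}_{p_Y}(\bm{y})-\bm{s}_{q_Y}(\bm{y})\|^2 = \|\bm{m}(\x)^\top(\bm{s}_{p_X}(\x)-\bm{s}_{q_X}(\x))\|^2$.

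Finally I would close the loop with a change of variables in the expectation: since $q_Y$ is the pushforward of $q_X$ under $T$, one has $\mathbb{E}_{q_Y}[g(\bm{y})] = \mathbb{E}_{q_X}[g(T(\x))]$ for any integrable $g$. Applying this to the squared-norm of the score difference converts $\mathcal{F}(q_Y,p_Y)$ into $\tfrac{1}{2}\mathbb{E}_{q_X}[\|\bm{m}(\x)^\top(\bm{s}_{p_X}(\x)-\bm{s}_{q_X}(\x))\|^2] = \mathcal{F}_m(q,p)$, as claimed. I expect the main obstacle to be purely bookkeeping rather than conceptual: keeping the transpose and inverse conventions of the Jacobian consistent so that $(\nabla_{\x}T)^{-\top}$ lands on the correct side of the score difference, and invoking the invertibility of $\nabla_{\x}T$ (guaranteed since $T$ is a differentiable invertible transformation) so that $\bm{m}$ is well defined everywhere on the support.
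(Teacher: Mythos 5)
Your proof is correct and follows essentially the same route as the paper's: both hinge on the cancellation of the $\nabla\log|\det\nabla_{\x}T|$ term in the score difference, the chain rule / inverse function theorem to produce the factor $(\nabla_{\x}T(\x))^{-\top}$, and a final change of variables in the expectation. The only cosmetic difference is that you write the change-of-variables in the forward direction ($\x\to\y$) and make the determinant cancellation explicit, whereas the paper works from the inverse map $T^{-1}(\y)$ and drops that term implicitly.
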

\begin{proof}
From the change of variable formula, the corresponding densities $p_Y(\y)$, $q_Y(\y)$ can be defined as:
\begin{equation*}
\begin{aligned}
    p_Y(\y) &= p_X(T^{-1}(\y)) |\frac{\partial T^{-1}(\y)}{\partial \y}|, \\
    q_Y(\y) &= q_X(T^{-1}(\y)) |\frac{\partial T^{-1}(\y)}{\partial \y}|.
\end{aligned}
\end{equation*}
Then the Fisher divergence $\mathcal{F}(q_Y, p_Y)$ is formulated as: 
\begin{equation}
\begin{aligned}
    &\mathcal{F}(q_Y, p_Y) := \frac{1}{2}\mathbb{E}_{q_Y}[||\nabla_{\y} \log p_Y(\y) - \nabla_{\y} \log q_Y(\y) ||_2^2] \\
    =& \frac{1}{2}\mathbb{E}_{q_Y}[||\nabla_{\y} \log p_X(T^{-1}(\y)) - \nabla_{\y} \log q_X(T^{-1}(\y)) ||_2^2] \\
    =& \frac{1}{2}\mathbb{E}_{q_Y}[|| \nabla_{\y} T^{-1}(\y)^\top \\
    &(\nabla_{T^{-1}(\y)} \log p_X(T^{-1}(\y)) - \nabla_{T^{-1}(\y)} \log q_X(T^{-1}(\y)) ) ||_2^2] \\
    =& \frac{1}{2}\mathbb{E}_{q_X}[|| (\nabla_{\x} T(\x))^{-\top} (\nabla_{\x} \log p_X(\x) - \nabla_{\x} \log q_X(\x) ) ||_2^2],
\end{aligned}
\end{equation}
where the last step comes from changing the variable to $\x = T^{-1}(\y)$ and noticing that $\nabla_{\y} T^{-1}(\y) = (\nabla_{\x} T(\x))^{-1}$ from the inverse function theorem.
This objective coincides with the \emph{diffusion Fisher divergence} (Eq.\ref{eq: DSM}). 
Importantly, $\mathcal{F}_m(q_X, p_X)$ is a valid divergence (i.e.~$\mathcal{F}_m(p_X, q_X) = 0$ iff.~$p_X = q_X$) when $m(\x)$ is an invertible matrix for every $\x$. As normalising flow transformations naturally give invertible Jacobian matrices, we can easily extablish the connection $\mathcal{F}(q_Y, p_Y) = \mathcal{F}_m(q_X, p_X)$ with $m(\x) = (\nabla_{\x} T(\x))^{-1}$.
\end{proof}
We also include the likelihood plots afte the transformation in Appendix \ref{apendix: additional plot}.

Similarly, we can prove the connections between DSD (Eq.\ref{eq: DSD}) and normalizing flow. The proof is in appendix \ref{appendix: proof of DSD}.
\begin{theorem}
For twice differentiable densities $q_X(\x)$, $p_X(\x)$, an invertible differentiable transformation $\bm{T}(\x):\mathbb{R}^D\rightarrow \mathbb{R}^D$ and differentiable test function in suitable test function family $\mathcal{H}$: $\bm{f}:\mathbb{R}^D\rightarrow \mathbb{R}^D\in\mathcal{H}$, the diffusion Stein discrepancy (Eq.\ref{eq: DSD}) is equivalent to the original Stein discrepancy
\begin{equation}
    \mathcal{S}(q_Y,p_Y)=\sup_{\bm{g}\in\mathcal{H}'}\mathbb{E}_{q_Y}[\bm{s}_{p_Y}(\bm{y})^T\bm{g}(\bm{y})+\nabla_{\bm{y}}^T\bm{g}(\bm{y})]
\end{equation}
where $\bm{g}(\y)=\bm{f}(\bm{T}^{-1}(\bm{y}))$, $\mathcal{H}'$ is the corresponding function space for $\bm{g}$, $p_Y$ and $q_Y$ are transformed densities by $\bm{T}(\cdot)$. The diffusion matrix $\bm{m}(\bm{x})=(\nabla_{\bm{x}}\bm{T}(\bm{x}))^{-1}$.
\label{thm: DSD}
\end{theorem}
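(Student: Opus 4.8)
The plan is to reduce the claimed equality $\mathcal{S}(q_Y,p_Y)=DSD_m(q_X,p_X)$ to a pointwise identity between two Stein operators, and then dispatch the two suprema by a bijection of test functions. The organizing idea is to write every Stein operator in divergence form: for a density $p$ and a test field $\bm{h}$, $\mathcal{A}_p\bm{h}:=\bm{s}_p^\top\bm{h}+\nabla^\top\bm{h}=\tfrac{1}{p}\nabla^\top(p\,\bm{h})$. With this, the $\y$-space integrand is $\mathcal{A}_{p_Y}\bm{g}=\tfrac{1}{p_Y}\nabla_\y^\top(p_Y\bm{g})$, while the $DSD_m$ integrand of Eq.~\ref{eq: DSD} with $\bm{m}=(\nabla_\x\bm{T})^{-1}$ is exactly $\tfrac{1}{p_X}\nabla_\x^\top(p_X\,\bm{m}\bm{f})$. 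So it suffices to show these two scalar fields coincide under $\y=\bm{T}(\x)$ and $\bm{g}=\bm{f}\circ\bm{T}^{-1}$.

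First I would record the change-of-variable facts. Writing $J:=\nabla_\x\bm{T}(\x)$ and $\bm{m}=J^{-1}$, one has $p_Y(\bm{T}(\x))=p_X(\x)\,|\det J|^{-1}$ and $\bm{g}(\bm{T}(\x))=\bm{f}(\x)$. Hence the vector field $V(\y):=p_Y(\y)\bm{g}(\y)$ satisfies $V(\bm{T}(\x))=J\,\bm{w}(\x)$ with $\bm{w}=|\det J|^{-1}p_X\,\bm{m}\bm{f}$; that is, $V$ is the pushforward $\bm{T}_{*}\bm{w}$.

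The key step is the divergence-of-pushforward identity $(\nabla_\y^\top\bm{T}_{*}\bm{w})\circ\bm{T}=\tfrac{1}{|\det J|}\nabla_\x^\top(|\det J|\,\bm{w})$. Applying it to the above $\bm{w}$ collapses the $|\det J|$ factors to give $(\nabla_\y^\top V)\circ\bm{T}=\tfrac{1}{|\det J|}\nabla_\x^\top(p_X\bm{m}\bm{f})$, and dividing by $p_Y\circ\bm{T}=p_X|\det J|^{-1}$ yields $\mathcal{A}_{p_Y}\bm{g}\circ\bm{T}=\tfrac{1}{p_X}\nabla_\x^\top(p_X\bm{m}\bm{f})=(\bm{m}^\top\bm{s}_{p_X})^\top\bm{f}+\nabla_\x^\top(\bm{m}\bm{f})$, the $DSD_m$ integrand. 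This single identity silently absorbs all the log-determinant corrections: the extra term $-J^{-\top}\nabla_\x\log|\det J|$ carried by $\bm{s}_{p_Y}$ and the analogous term generated when differentiating $\bm{g}=\bm{f}\circ\bm{T}^{-1}$ cancel, which is exactly what one would otherwise verify by hand in a raw index computation.

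Finally I would push the equality through the expectation and the suprema. Since $\mathbb{E}_{q_Y}[A(\y)]=\mathbb{E}_{q_X}[A(\bm{T}(\x))]$ (the same change of variables used in the proof of Theorem~\ref{thm: DSM}), the integrand equality gives $\mathbb{E}_{q_Y}[\mathcal{A}_{p_Y}\bm{g}]=\mathbb{E}_{q_X}[(\bm{m}^\top\bm{s}_{p_X})^\top\bm{f}+\nabla_\x^\top(\bm{m}\bm{f})]$ for each matched pair, and because $\mathcal{H}'$ is defined so that $\bm{g}\mapsto\bm{f}=\bm{g}\circ\bm{T}$ is a bijection onto $\mathcal{H}$, taking $\sup_{\bm{g}\in\mathcal{H}'}$ on the left equals $\sup_{\bm{f}\in\mathcal{H}}$ on the right. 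The main obstacle is the divergence-of-pushforward identity: it is the one nonroutine fact, it rests on the equality of mixed partials of $\bm{T}$ (so it needs $\bm{T}\in C^2$, consistent with the twice-differentiability assumptions), and it is the precise mechanism by which the two sets of log-determinant terms annihilate; everything else is change of variables plus a bookkeeping bijection between test-function families.
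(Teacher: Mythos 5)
Your proof is correct, but it takes a genuinely different route from the paper's. The paper works at the level of the raw chain rule: it expands $\bm{s}_{p_Y}$ into $(\nabla_{\y}\bm{T}^{-1}(\y))^{\top}\bm{s}_{p_X}$ plus the gradient of $\log|\partial\bm{T}^{-1}(\y)/\partial\y|$, expands $\nabla_{\y}^{\top}\bm{g}$ as $Tr[(\nabla_{\y}\bm{T}^{-1}(\y))\nabla_{\x}\bm{f}]$, and then the entire burden falls on an explicit computation (Jacobi's formula plus the inverse function theorem) showing that the log-determinant gradient contributes exactly $(\nabla_{\x}^{\top}\bm{m}(\x))^{\top}\bm{f}$, the cross term needed to reassemble $\nabla_{\x}^{\top}(\bm{m}\bm{f})$. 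You instead put both Stein operators in weighted-divergence form $\frac{1}{p}\nabla^{\top}(p\,\bm{h})$ and reduce everything to a single structural fact, the Piola (divergence-of-pushforward) identity, so the log-determinant bookkeeping is absorbed rather than tracked term by term. What this buys is conceptual clarity: it explains \emph{why} the Stein operator transforms covariantly (a weighted divergence pushes forward cleanly) and isolates the one nontrivial ingredient. What it costs is that the Piola identity is itself the nonroutine step and you do not prove it; its standard proof is essentially the same cofactor/Jacobi computation the paper performs for its circled term, so the two arguments are ultimately the same calculation in different packaging. Two minor points to make explicit: $\det\nabla_{\x}\bm{T}(\x)$ has constant sign for a diffeomorphism of $\mathbb{R}^D$, so your $|\det J|$ manipulations are harmless; and, as you note, the identification of the two suprema requires that $\bm{f}\mapsto\bm{f}\circ\bm{T}^{-1}$ be a bijection from $\mathcal{H}$ onto $\mathcal{H}'$, which is how the theorem defines $\mathcal{H}'$ (the paper's own proof also leaves this step implicit).
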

Based on the above two theorems, we formally establish the connections between the \emph{diffusion Fisher divergence}/\emph{DSD} with normalizing flows. This gives us an interpretation of the diffusion matrix as the inverse of the Jacobian matrix defined by the flow. 

\subsection{Better flow design}
\wg{
Based on the interpretation, we try to give a better design for the diffusion matrix $\bm{m}$. Here, we design a flow $\bm{T}_G(\cdot)$ that transforms the Student-t distribution $p_\theta$ to a standard Gaussian $\mathcal{N}(0,1)$, which we named as \textbf{Gaussian flow}:
\begin{equation}
    \bm{T}_G(\bm{x}) = F_G^{-1}\circ F_{\theta}(\bm{x}).
    \label{eq: Gaussian Flow}
\end{equation}
Here $F_G$ and $F_\theta$ are cumulative density functions for $\mathcal{N}(0,1)$ and $p_\theta$ respectively. 
We plot the corresponding $DSM_m(q, p_\theta)$ loss in the right panel of Figure \ref{fig:Student-t}. Both the manually designed flow and Gaussian flow can recover the ground truth $\theta$ regardless of initialization. However, Gaussian flow allows faster convergence during training. The fast convergence regions is the region where the gradient of the DSM w.r.t $\theta$ has a magnitude greater than $1$. The Gaussian flow has a much wider region compared to manual flow. The length of the region is $4.44$ and $10.56$ respectively (more than $2$ times). For high dimensional distributions, this area of the region can scale up with $O(2^D)$, which can have significant impact on convergence speed. 
Another advantage of this systematic design of the diffusion matrix is its robustness, which is further discussed in Appendix \ref{appendix: robustness}.
}

\subsection{Interpreting DSM using Riemannian manifold}
Assume we have a Riemannian manifold $(\mathcal{M},\bm{g})$ with Riemannian metric tensor $\bm{g}$. For each point $\bm{a}\in\mathcal{M}$, we assume it has a local coordinates $\bm{x}_a=[x_a^1,\ldots,x_a^D]$. We can prove the following proposition:
\begin{proposition}
Define two probability measures $\mathbb{Q}$, $\mathbb{P}$ on the Riemannian manifold $(\mathcal{M},\bm{g})$ as defined above. We denote the corresponding densities (in terms of local coordinates $\x$) w.r.t. Riemannian manifold as $\tilde{p}(\bm{x})=\frac{d\mathbb{P}}{d\mathcal{M}(\bm{x})}$ and $\tilde{q}(\bm{x})=\frac{d\mathbb{Q}}{d\mathcal{M}(\bm{x})}$. Then, the Fisher divergence from $\tilde{q}$ to $\tilde{q}$ is 
\begin{equation}
    \mathcal{F}_{\mathcal{M}}(\tilde{q},\tilde{p})=\frac{1}{2}\mathbb{E}_{q}[\bm{\Delta}
(\bm{x})^T\bm{G}(\x)^{-1}\bm{\Delta}
(\bm{x})]
\end{equation}
where $p(\bm{x})=\frac{d\mathbb{P}}{d\mathcal{M}(\bm{x})}\frac{d\mathcal{M}(\x)}{d\x}$, $q(\bm{x})$ is defined similarly, and $\bm{\Delta}
(\bm{x})=\bm{s}_p(\bm{x})-\bm{s}_q(\bm{x})$. $\bm{G}(\x)$ is an symmetric positive definite matrix representing the Riemannian metric tensor. Particularly, if $\bm{G}(\x)=\bm{m}(\x)^{-T}\bm{m}(\bm{x})^{-1}$, then $\mathcal{F}_{\mathcal{M}}(\tilde{q},\tilde{p})$ is equivalent to the diffusion Fisher divergence (Eq.\ref{eq: DSM}) with diffusion matrix $\bm{m}(\x)$. 
\label{prop: Riemannian}
\end{proposition}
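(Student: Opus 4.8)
The plan is to begin from the intrinsic, coordinate-free definition of the Fisher divergence on the manifold and show that, once written in the local coordinates $\x$, it collapses to the stated quadratic form. On a Riemannian manifold the natural analogue of the score is the Riemannian gradient of the log-density, $\mathrm{grad}_{\bm{g}}\log\tilde p$, and the natural magnitude is the metric-induced norm $\|\cdot\|_{\bm{g}}$. So I would take as the definition
\[
\mathcal{F}_{\mathcal{M}}(\tilde q,\tilde p)=\frac12\,\mathbb{E}_{\tilde q}\big[\,\|\mathrm{grad}_{\bm{g}}\log\tilde p-\mathrm{grad}_{\bm{g}}\log\tilde q\|_{\bm{g}}^2\,\big],
\]
and the whole task reduces to rewriting this expression in coordinates.

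First I would recall the two standard coordinate identities: the Riemannian gradient of a scalar $f$ reads $\mathrm{grad}_{\bm{g}} f=\bm{G}^{-1}\nabla_\x f$, and the squared norm of a tangent vector $v$ is $\|v\|_{\bm{g}}^2=v^\top\bm{G}\,v$. Setting $\bm{a}:=\nabla_\x\log\tilde p-\nabla_\x\log\tilde q$ and using that $\bm{G}$ is symmetric (so $\bm{G}^{-\top}=\bm{G}^{-1}$), a one-line computation gives
\[
\|\bm{G}^{-1}\bm{a}\|_{\bm{g}}^2=(\bm{G}^{-1}\bm{a})^\top\bm{G}\,(\bm{G}^{-1}\bm{a})=\bm{a}^\top\bm{G}^{-1}\bm{a}.
\]

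The key step is then to identify $\bm{a}$ with $\bm{\Delta}=\bm{s}_p-\bm{s}_q$. Here I use that the Riemannian volume element is $d\mathcal{M}(\x)=\sqrt{\det\bm{G}(\x)}\,d\x$, so the Lebesgue densities obey $p(\x)=\tilde p(\x)\sqrt{\det\bm{G}(\x)}$ and likewise for $q$. Taking logarithms, $\log p-\log q=\log\tilde p-\log\tilde q$ because the shared factor $\tfrac12\log\det\bm{G}$ cancels; differentiating yields $\bm{a}=\nabla_\x\log\tilde p-\nabla_\x\log\tilde q=\nabla_\x\log p-\nabla_\x\log q=\bm{\Delta}(\x)$. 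The same volume-element identity converts the expectation, since $\mathbb{E}_{\tilde q}[\cdot]=\int\tilde q(\x)(\cdot)\sqrt{\det\bm{G}}\,d\x=\int q(\x)(\cdot)\,d\x=\mathbb{E}_q[\cdot]$. Combining the three displays gives $\mathcal{F}_{\mathcal{M}}(\tilde q,\tilde p)=\tfrac12\mathbb{E}_q[\bm{\Delta}^\top\bm{G}^{-1}\bm{\Delta}]$, which is the claimed identity.

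Finally, the connection to diffusion Fisher divergence is a one-line substitution: setting $\bm{G}(\x)=\bm{m}(\x)^{-\top}\bm{m}(\x)^{-1}$ gives $\bm{G}^{-1}=\bm{m}\bm{m}^\top$, hence $\bm{\Delta}^\top\bm{G}^{-1}\bm{\Delta}=\|\bm{m}^\top\bm{\Delta}\|^2$, matching Eq.~\ref{eq: DSM} exactly. I expect the only genuinely delicate point to be the cancellation of the $\sqrt{\det\bm{G}}$ factor: one must verify that this factor is identical for $\mathbb{P}$ and $\mathbb{Q}$ (it depends only on the shared metric $\bm{g}$, not on the measure), which is precisely what makes the difference of \emph{Riemannian} scores coincide with the difference of \emph{Euclidean} scores and lets the volume correction drop out.
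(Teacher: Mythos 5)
Your proposal is correct and follows essentially the same route as the paper's proof: both express the Riemannian score as $\bm{G}^{-1}\nabla_{\x}\log\tilde p$, use $\|v\|_{\bm{g}}^2=v^\top\bm{G}v$ to collapse the norm to $\bm{\Delta}^\top\bm{G}^{-1}\bm{\Delta}$, use $d\mathcal{M}(\x)=\sqrt{|\bm{G}(\x)|}\,d\x$ both to convert the expectation to $\mathbb{E}_q$ and to show the $\sqrt{|\bm{G}|}$ factor cancels in the score difference, and finish with the substitution $\bm{G}^{-1}=\bm{m}\bm{m}^\top$. The one point you flag as delicate (the cancellation of the shared volume factor) is exactly the step the paper handles by noting $\tilde{\Delta}(\x)=\Delta(\x)$.
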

The proof is in appendix \ref{appendix: Riemannian manifold}.

This result is more general than theorem \ref{thm: DSM}. Specifically, theorem \ref{thm: DSM} only proves a sufficient condition for the \emph{diffusion Fisher divergence} to be a valid discrepancy. Namely, if we have an invertible flow, the diffusion matrix $\bm{m}(x)$ must be invertible. However, the converse is not true. On the other hand, proposition \ref{prop: Riemannian} only requires $\bm{m}(\x)$ to be invertible, which is more general. Indeed, from the topological point of view, if we have an invertible and differentiable flow $\bm{T}$, then the transformed space (Riemannian manifold) is actually diffeomorphic to the original space (e.g. $\mathbb{R}^D$). Thus, this flow can be viewed as a special case of \citet{gemici2016normalizing}. But in general, Riemannian manifold may not be diffeomorphic to $\mathbb{R}^D$, which explains why theorem \ref{thm: DSM} is only a sufficient condition. 

\subsection{Continuous DSM with ODE flow}
Previous sections assume a deterministic transformation $\bm{T}(\bm{x})$. Recent work has shown promising results for continuous flows characterised by an ODE \citep{chen2018neural,grathwohl2018ffjord}.
\begin{equation}
    d\bm{x}=\bm{g}(\bm{x}(t))dt
    \label{eq: ODE flow}
\end{equation}
where $\bm{g}(\x(t))$ is a deterministic drift that is uniformly Lipschitz continuous w.r.t. $\bm{x}$. We define $p_t$ and $q_t$ to be the corresponding densities for $\bm{x}(t)$. Inspired by \citet{chen2018neural}, we can characterise the instantaneous change of the score matching loss $\frac{d\mathcal{F}(q_t,p_t)}{dt}$ by the following proposition:
\begin{proposition}
Let $p_t(\x(t))$, $q_t(\x(t))$ be two probability density functions, where $\x(t)$ is characterized by an ODE defined in eq.\ref{eq: ODE flow}. Assume $\bm{g}(\bm{x}(t))$ is uniformly Lipschitz continuous w.r.t. $\bm{x}(t)$. Then, the instantaneous change of score matching loss follows:
\begin{equation}
    \frac{d\mathcal{F}(q_t,p_t)}{dt} = -\frac{1}{2}\mathbb{E}_{q_t}[\Delta(\x)^T(\nabla_{\x}\bm{g}(\bm{x})+\nabla_{\x}\bm{g}(\bm{x})^T)\Delta(\x)]
\end{equation}
where $\Delta(\bm{x})=\bm{s}_{p_t}(\bm{x})-\bm{s}_{q_t}(\bm{x})$.
\label{prop: continuous DSM}
\end{proposition}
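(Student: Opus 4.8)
The plan is to reduce this continuous statement to the discrete equivalence already proved in Theorem~\ref{thm: DSM}, by viewing the flow restricted to an infinitesimal time window $[t, t+dt]$ as a single normalizing-flow transformation and then expanding everything to first order in $dt$. The point is that over one infinitesimal step the ODE flow is nothing but a near-identity invertible map, so Theorem~\ref{thm: DSM} immediately converts $\mathcal{F}(q_{t+dt}, p_{t+dt})$ into a diffusion Fisher divergence evaluated at time $t$, and the instantaneous change falls out of a Neumann expansion of the resulting diffusion matrix.

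Concretely, I would first identify the map induced by eq.~\ref{eq: ODE flow} over $[t, t+dt]$. Integrating gives, to first order, the transformation $\bm{T}(\x) = \x + \bm{g}(\x)\,dt$ sending $\x(t)$ to $\x(t+dt)$, with Jacobian $\nabla_{\x}\bm{T}(\x) = \mathbf{I} + \nabla_{\x}\bm{g}(\x)\,dt$. By the Neumann expansion the associated diffusion matrix is therefore $\bm{m}(\x) = (\nabla_{\x}\bm{T}(\x))^{-1} = \mathbf{I} - \nabla_{\x}\bm{g}(\x)\,dt + \mathcal{O}(dt^2)$. Applying Theorem~\ref{thm: DSM} with $q_X = q_t,\, p_X = p_t$ and $q_Y = q_{t+dt},\, p_Y = p_{t+dt}$ then yields the identity $\mathcal{F}(q_{t+dt}, p_{t+dt}) = \mathcal{F}_m(q_t, p_t)$.

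The remaining step is purely algebraic. The key fact is $\bm{m}\bm{m}^\top = \mathbf{I} - (\nabla_{\x}\bm{g} + \nabla_{\x}\bm{g}^\top)\,dt + \mathcal{O}(dt^2)$, so that $\|\bm{m}^\top \Delta\|^2 = \Delta^\top \bm{m}\bm{m}^\top \Delta = \|\Delta\|^2 - \Delta^\top(\nabla_{\x}\bm{g} + \nabla_{\x}\bm{g}^\top)\Delta\,dt + \mathcal{O}(dt^2)$. Taking expectations under $q_t$ and recognizing $\mathcal{F}(q_t, p_t) = \tfrac{1}{2}\mathbb{E}_{q_t}[\|\Delta\|^2]$ gives
\[
\mathcal{F}(q_{t+dt}, p_{t+dt}) = \mathcal{F}(q_t, p_t) - \frac{dt}{2}\mathbb{E}_{q_t}\!\left[\Delta^\top(\nabla_{\x}\bm{g} + \nabla_{\x}\bm{g}^\top)\Delta\right] + \mathcal{O}(dt^2),
\]
and forming the difference quotient and letting $dt \to 0$ delivers the claimed ODE.

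The main obstacle I expect is making the $dt \to 0$ limit rigorous rather than merely formal: one must control the $\mathcal{O}(dt^2)$ remainder uniformly so that it still vanishes after division by $dt$, which requires integrability of the Hessian-type terms against $q_t$, the uniform Lipschitz assumption on $\bm{g}$, and enough regularity on the densities to interchange the limit with the expectation. A fully self-contained alternative would bypass Theorem~\ref{thm: DSM} and differentiate $\tfrac{1}{2}\mathbb{E}_{q_t}[\|\Delta\|^2]$ directly, using the Liouville continuity equation $\partial_t p_t = -\nabla^\top(p_t\,\bm{g})$ to track the evolution of the measure $q_t$ and of the scores $\bm{s}_{p_t}, \bm{s}_{q_t}$ simultaneously; this is more calculation-heavy but makes the required regularity conditions explicit.
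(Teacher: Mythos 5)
Your proposal is correct and follows essentially the same route as the paper's own proof: the paper likewise treats the forward Euler step $\bm{T}_{\delta}(\x)=\x+\delta\,\bm{g}(\x)$ as a near-identity invertible flow, invokes Theorem~\ref{thm: DSM} to write $\mathcal{F}(q_{t+\delta},p_{t+\delta})$ as a diffusion Fisher divergence with $\bm{m}(\x)=(\mathbf{I}+\delta\nabla_{\x}\bm{g}(\x))^{-1}$, and extracts the limit of $\delta^{-1}(\bm{m}\bm{m}^{\top}-\mathbf{I})=-(\nabla_{\x}\bm{g}+\nabla_{\x}\bm{g}^{\top})+\mathcal{O}(\delta)$ inside the expectation. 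Your Neumann-series phrasing of the same first-order expansion and your remarks on the regularity needed to interchange the limit with the expectation (which the paper glosses over) are the only differences.
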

The proof is in appendix \ref{appendix: proof of continous DSM}.

\section{Conclusion}
\label{Sec: conclusion}
In this paper, we discuss the connections of the \emph{diffusion score matching} and \emph{diffusion Stein discrepancy} to normalizing flows. Specifically, we theoretically prove that the \emph{diffusion Fisher divergence} (or {DSD}) is equivalent to performing the original \emph{Fisher divergence} (or \emph{Stein discrepancy}) on the transformed densities. The diffusion matrix $\bm{m}(\x)$ is defined by the inverse of the flow's Jacobian matrix. We also establish the connection of \emph{diffusion Fisher divergence} with densities defined on Riemannian manifolds. In the end, we extend the \emph{diffusion Fisher divergence} by continuous flow, and derive an ODE characterizing its instantaneous changes. By building the connections, we hope to shed lights on developing training method for the diffusion matrix to enable the practical usage for large models. 
\bibliography{example_paper}
\bibliographystyle{icml2021}

\clearpage
\appendix
\section{Proof of theorem \ref{thm: DSD}}
\label{appendix: proof of DSD}
\begin{proof}
Let's first define the Stein operator as 
\begin{equation}
    \mathcal{S}_{p_Y}[\bm{g}]=\bm{s}_{p_Y}(\bm{y})^T\bm{g}(\bm{x})+\nabla_{\bm{y}}^T\bm{g}(\y)
\end{equation}
for the test function $\bm{g}(\bm{y})$ and density $p_Y(\y)$. Thus, the Stein discrepancy can be rewritten as 
\begin{equation}
    \mathcal{S}(q_Y,p_Y)=\sup_{\bm{g}\in\mathcal{H}}\mathbb{E}_{q_Y}[\mathcal{S}_{p_Y}[\bm{g}]]
\end{equation}
In the following, we will focus on the Stein operator. From the change of variable formula $\bm{y}=\bm{T}(\x)$, we have
\begin{equation}
    p_Y(\y)=p_X(\bm{T}^{-1}(\y))\left|\frac{\partial \bm{T}^{-1}(\y)}{\partial \y}\right|,\;\;\;\;\; \bm{g}(\y)=\bm{f}(\bm{T}^{-1}(\y))
\end{equation}
Now we can rewrite the Stein operator:
\begin{equation}
    \begin{split}
        &\mathcal{S}_{p_Y}[\bm{g}]=\nabla_{\y}\log p_Y(\y)^T\bm{g}(\y)+\nabla_{\y}^T\bm{g}(\y)\\
        =&\nabla_{\y}\log p_X(\bm{T}^{-1}(\y))^T\bm{g}(\y)+(\nabla_{\y}\log \left|\frac{\partial \bm{T}^{-1}(\y)}{\partial \y}\right|)^T\bm{g}(\y)\\
        &+\nabla_{\y}\bm{g}(\y)\\
        =&\left[(\nabla_{\y}\bm{T}^{-1}(\y))^T(\nabla_{\bm{T}^{-1}(\y)}\log p_X(\bm{T}^{-1}(\y)))\right]^T\bm{g}(\y)\\
        &+\underbrace{(\nabla_{\y}\log \left|\frac{\partial \bm{T}^{-1}(\y)}{\partial \y}\right|)^T\bm{g}(\y)}_{\circled{1}}\\
        &+Tr[(\nabla_{\y}\bm{T}^{-1}(\y))\nabla_{\bm{T}^{-1}(\y)}\bm{f}(\bm{T}^{-1}(\y))]
    \end{split}
\end{equation}
The second equality is from the chain rule and definition of divergence operator $\nabla^T$. For the layout of the matrix calculus, we follow the column vector layout as the following: for a function $\bm{h}:\mathbb{R}^D\rightarrow \mathbb{R}$, and $\bm{f}:\mathbb{R}^D\rightarrow\mathbb{R}^N$, we have
\begin{equation}
    \begin{split}
        &\frac{\partial h(\bm{x})}{\partial \x} = \left[\begin{array}{c}
             \frac{\partial h(\bm{x})}{\partial x_1}  \\
              \vdots\\
              \frac{\partial h(\bm{x})}{\partial x_D}
        \end{array}\right]\\
        &\frac{\partial \bm{f}(\bm{x})}{\partial \x}=\left[\begin{array}{ccc}
             \frac{\partial f_1(\x)}{\partial x_1}&\ldots&\frac{\partial f_1(\x)}{\partial x_D}\\
             \vdots&\vdots&\vdots\\
             \frac{\partial f_N(\x)}{\partial x_1}&\ldots&\frac{\partial f_N(\x)}{\partial x_D}
        \end{array}\right]
    \end{split}
\end{equation}
Now, we focus on $\circled{1}$ term:
\begin{equation}
\begin{aligned}
    &\nabla_{\y} \log |\frac{\partial T^{-1}(\y)}{\partial \y}| = Tr[(\nabla_{\y} T^{-1}(\y))^{-1} \nabla_{\y} \nabla_{\y} T^{-1}(\y)] \\
    &= Tr[\nabla_{\x}T(\x) \nabla_{\y} (\nabla_{\x} T(\x))^{-1}] \\
    &= Tr[\nabla_{\x}T(\x) \nabla_{\y} T^{-1}(\y) \nabla_{\x} (\nabla_{\x} T(\x))^{-1}] \\
    &= Tr[\nabla_{\x} (\nabla_{\x} T(\x))^{-1}] 
\end{aligned}
\end{equation}
where we use the inverse function theorem $\nabla_{\y}\bm{T}^{-1}(\y)=(\nabla_{\bm{x}}\bm{T}(\x))^{-1}$. In addition, we define $\nabla_{\x}^T (\nabla_{\x}\bm{T}(\x))^{-1} = Tr[\nabla_{\x} (\nabla_{\x}\bm{T}(\x))^{-1}]$.

So we can set $\bm{m}(\x)=(\nabla_{\x}\bm{T}(\x))^{-1}$, we can obtain:
\begin{equation}
\begin{split}
        &\mathcal{S}_{p_Y}[\bm{g}]=(\bm{m}(\x)^T\bm{s}_p(\bm{x}))^T\bm{f}(\bm{x})+(\nabla_{\bm{x}}^T\bm{m}(\x))^T\bm{f}(\bm{x})\\
        &+Tr[\bm{m}(\x)\nabla_{x}\bm{f}(\x)]\\
        &=(\bm{m}(\bm{x})^T\bm{s}_p(\x))^T\bm{f}(\x)+\nabla_{\bm{x}}^T[\bm{m}(\x)\bm{f}(\bm{x})]
\end{split}
\end{equation}
which is exactly the same as the inner part of DSD (Eq.\ref{eq: DSD}). So with change of variable formula, we can easily show
\begin{equation}
    \mathcal{S}(q_Y,p_Y)=DSD_m(q_X,p_X)
\end{equation}
\end{proof}

\section{Proof of proposition \ref{prop: Riemannian}}
\label{appendix: Riemannian manifold}
With the definition of the Riemannian manifold $(\mathcal{M},\bm{g})$, for any point $\bm{a}\in\mathcal{M}$ with local coordinates $\bm{x}\in\mathbb{R}^D$, and two vectors $\bm{u},\bm{v}$ from its tangent plane $T_a\mathcal{M}$, we can represents $\bm{u}$, $\bm{v}$ using the basis $(\frac{\partial}{\partial x_i})_{\bm{a}}$ as
\begin{equation}
    \bm{u}=\sum_{i=1}^D{u_i(\frac{\partial}{\partial x_i})_{\bm{a}}},\;\;\;\bm{v}=\sum_{i=1}^D{v_i(\frac{\partial}{\partial x_i})_{\bm{a}}}
\end{equation}
The inner product defined by the metric $\bm{g}$ can be expressed as
\begin{equation}
    \bm{g}(\bm{u},\bm{v}) = \sum_{i,j}^D{u_iv_j\langle(\frac{\partial}{\partial x_i})_{\bm{a}},(\frac{\partial}{\partial x_j})_{\bm{a}}\rangle_{g}} = \sum_{i,j}^D{u_ig_{ij}(\bm{x})v_j}
\end{equation}
where $g_{ij}(\bm{x})$ is the $ij-\text{th}$ element of matrix $\bm{G}(\bm{x})$ and $\langle\cdot,\cdot\rangle_{g}$ is the inner product defined by Riemannian metric $\bm{g}$. 

We assume the measure $\mathcal{M}(\x)$ is absolutely continuous w.r.t. Lebesgue measure, then we have the following change of variable formula
\begin{equation}
    d\mathcal{M}(\bm{x})=\sqrt{\left|\bm{G}(\bm{x})\right|}d\bm{x}
\end{equation}
Then we can represents the densities $\tilde{p}$, $\tilde{q}$ under Lebessgue measure
\begin{equation}
    p(\x) = \frac{d\mathbb{P}}{d\mathcal{M}(\x)}\frac{d\mathcal{M}(\x)}{d\x}=\tilde{p}(\bm{x})\sqrt{\left|\bm{G}(\bm{x})\right|}
\end{equation}
and $q(\bm{x})$ is defined accordingly. 
The score matching loss for $\tilde{p}$ and $\tilde{q}$ is 
\begin{equation}
\begin{split}
    \mathcal{F}_{\mathcal{M}}(\tilde{q},\tilde{p})&=\frac{1}{2}\int \tilde{q}(\bm{x})||\nabla \log \tilde{p}(\bm{x})-\nabla \log \tilde{q}(\bm{x})||_{g}^2d\mathcal{M}(\bm{x})\\
    &=\frac{1}{2}\int q(\bm{x})||\nabla \log \tilde{p}(\bm{x})-\nabla \log \tilde{q}(\bm{x})||_{g}^2d\bm{x}
\end{split}
\label{inter: Riemannian Score matching}
\end{equation}
Now let's define $\nabla\log \tilde{p}(\bm{x})$. From the basics of Riemannian manifold, for a point $\bm{a}\in\mathcal{M}$ with local coordinate $\bm{x}$, and $\mathcal{X}$ is a vector field on $\mathcal{M}$, we have the following definition
\begin{equation}
    \langle \sum_{i=1}^D (\nabla\log \tilde{p}(\bm{x}))_i(\frac{\partial}{\partial x_i})_{\bm{a}},\sum_{j=1}^D{\mathcal{X}_j(\frac{\partial}{\partial x_j})}\rangle_{g}=\sum_{i=1}^D{\mathcal{X}_i\frac{\partial \log \tilde{p}}{\partial x_i}}
\end{equation}
Written in terms of matrix form, assume $\bm{X}=[\mathcal{X}_1,\ldots,\mathcal{X}_D]^T$, and $g_{ij}(\bm{x})$ is the element of symmetric positive definite matrix $\bm{G}(\bm{x})$, we have
\begin{equation}
    \begin{split}
        &(\nabla \log \tilde{p})^T\bm{G}(\bm{x})\bm{X} = (\frac{\partial \log \tilde{p}}{\partial \bm{x}})^T\bm{X} \\
        \Longrightarrow&\nabla \log \tilde{p} = \bm{G}^{-1}(\bm{x})(\frac{\partial \log \tilde{p}}{\partial \bm{x}})
    \end{split}
\end{equation}
Therefore, we have
\begin{equation}
    \begin{split}
    &||\nabla \log \tilde{p}(\bm{x})-\nabla\log \tilde{q}(\bm{x})||_g^2\\
    =&\langle \nabla \log \tilde{p}(\bm{x})-\nabla\log \tilde{q}(\bm{x}),\nabla \log \tilde{p}(\bm{x})-\nabla\log \tilde{q}(\bm{x})\rangle_{g}\\
    =&\langle \bm{G}^{-1}(\bm{x})\underbrace{(\frac{\partial \log \tilde{p}}{\partial \bm{x}}-\frac{\partial \log \tilde{q}}{\partial \bm{x}})}_{\tilde{\Delta}(\x)},\bm{G}^{-1}(\bm{x})(\frac{\partial \log \tilde{p}}{\partial \bm{x}}-\frac{\partial \log \tilde{q}}{\partial \bm{x}})\rangle_{g}\\
    =&\tilde{\Delta}(\x)^T\bm{G}^{-1}(\bm{x})\bm{G}(\bm{x})\bm{G}^{-1}(\bm{x})\tilde{\Delta}(\x)\\
    =&\tilde{\Delta}(\x)^T\bm{G}^{-1}(\x)\tilde{\Delta}(\x)
    \end{split}
\end{equation}
By change of variable formula, it is also easy to show that 
\begin{equation}
    \tilde{\Delta}(\x)=\underbrace{(\frac{\partial \log {p}}{\partial \bm{x}}-\frac{\partial \log {q}}{\partial \bm{x}})}_{\Delta(\x)}
\end{equation}
Therefore, we have
\begin{equation}
    ||\nabla \log \tilde{p}(\bm{x})-\nabla\log \tilde{q}(\bm{x})||_g^2=\Delta^T(\x)\bm{G}^{-1}(\x)\Delta(\bm{x})
\end{equation}

Substitute back to $\mathcal{F}_{\mathcal{M}}(\tilde{q},\tilde{p})$ (Eq.\ref{inter: Riemannian Score matching}), we can obtain the result. Particularly, compare to \emph{diffusion Fisher divergence} (Eq.\ref{eq: DSM}), we can observe that if $\bm{G}(\x)=\bm{m}(\x)^{-T}\bm{m}(\x)^{-1}$, the $\mathcal{F}_{\mathcal{M}}(\tilde{q},\tilde{p})$ is equivalent to \emph{diffusion Fisher divergence}. Indeed, as $\bm{m}(\x)\in\mathbb{R}^{D\times D}$ is an invertible matrix, then $\bm{m}(\x)^{-T}\bm{m}(\x)^{-1}$ must be symmetric positive definite, which satisfies the requirements for $\bm{G}(\bm{x})$. 

\section{Proof of proposition \ref{prop: continuous DSM}}
\label{appendix: proof of continous DSM}

An ODE flow is defined by the solution of an ODE:
\begin{equation}
    d\x = \g(\x)dt
\end{equation}
with $\g(\x)$ the deterministic drift term. Let us consider the forward Euler discretisation of the ODE, which gives
\begin{equation}
    \x(t+\delta) = \x(t) + \delta \g(\x(t)) := \bm{T}_{\delta}(\x(t)).
\end{equation}
With $\delta \approx 0$ we see that $T_\delta$ is an invertible transformation. Now consider $\y = \x(t+\delta)$ and $\x(t) = \x$. This again pushes $p_X(\x)$ and $q_X(\x)$ to $p_Y(\y)$ and $q_Y(\y)$, respectively. Therefore we can reuse results from theorem \ref{thm: DSM} and derive
\begin{equation}
    \mathcal{F}(p_Y, q_Y) = \frac{1}{2}\mathbb{E}_{q_X}[||\bm{m}(\x)^T(\bm{s}_{p_X}(\x)-\bm{s}_{q_X}(\x))||^2],
\end{equation}
where $\bm{m}(\x) = (\nabla_{\bm{x}}\bm{T}_{\delta}(\bm{x}))^{-1}$
%
Notice that $T_{\delta}(\x) = \x$ when $\delta = 0$. This means we can compute the change of score matching at time $t$ as:
\begin{equation}
\begin{aligned}
    &\frac{\partial}{\partial t} \mathcal{F}(p_Y, q_Y) = \lim_{\delta \rightarrow 0^{+}} \frac{F(p_Y, q_Y) - F(p_X, q_X)}{\delta} \\
    &= \frac{1}{2}\lim_{\delta \rightarrow 0^{+}} \mathbb{E}_{q_X(\x)}[\Delta(\x)^\top \delta^{-1}(m(\x)m(\x)^{\top} - \mathbf{I}) \Delta(\x)],
\end{aligned}
\end{equation}
with $\Delta(\x) = \nabla_{\x}\log p_X(\x) - \nabla_{\x}\log q_X(\x)$. As $\nabla_{\x} T_{\delta}(\x) = \mathbf{I} + \delta \nabla_{\x} \g(\x)$, simple calculation shows that
\begin{equation}
\begin{aligned}
    &\delta^{-1}(m(\x)m(\x)^{\top} - \mathbf{I}) \\
    =& \delta^{-1}[[(\mathbf{I} + \delta \nabla_{\x} \g(\x))^\top(\mathbf{I} + \delta \nabla_{\x} \g(\x))]^{-1} - \mathbf{I}] \\
    =& \delta^{-1}[[\mathbf{I} + \delta (\nabla_{\x}\g(\x) + \nabla_{\x} \g(\x)^\top) + \mathcal{O}(\delta^2)]^{-1} - \mathbf{I}] \\
    =& -[\nabla_{\x}\g(\x) + \nabla_{\x} \g(\x)^\top + \mathcal{O}(\delta)]\\
    &[\mathbf{I} + \delta (\nabla_{\x}\g(\x) + \nabla_{\x} \g(\x)^\top) + \mathcal{O}(\delta^2)]^{-1},
\end{aligned}
\end{equation}
which leads to
\begin{equation}
\begin{aligned}
    &\frac{\partial}{\partial t} \mathcal{F}(p_Y, q_Y) \\
 =& \lim_{\delta \rightarrow 0^{+}} \frac{1}{2}\mathbb{E}_{q_X(\x)}[\Delta(\x)^\top \delta^{-1}(m(\x)m(\x)^{\top} - \mathbf{I}) \Delta(\x)] \\
    =& -\frac{1}{2}\mathbb{E}_{q_X(\x)}[\Delta(\x)^\top ( \nabla_{\x}\g(\x) + \nabla_{\x} \g(\x)^\top ) \Delta(\x)]
\end{aligned}
\end{equation}
As this quantifies the instantaneous changes, replacing $p_t$ and $q_t$ for $p_Y$, $p_X$, $q_Y$ and $q_X$ gives the instantaneous change of score matching loss.

\section{Additional plots}
\label{apendix: additional plot}
\begin{figure*}
    \centering
    \includegraphics[scale=0.25]{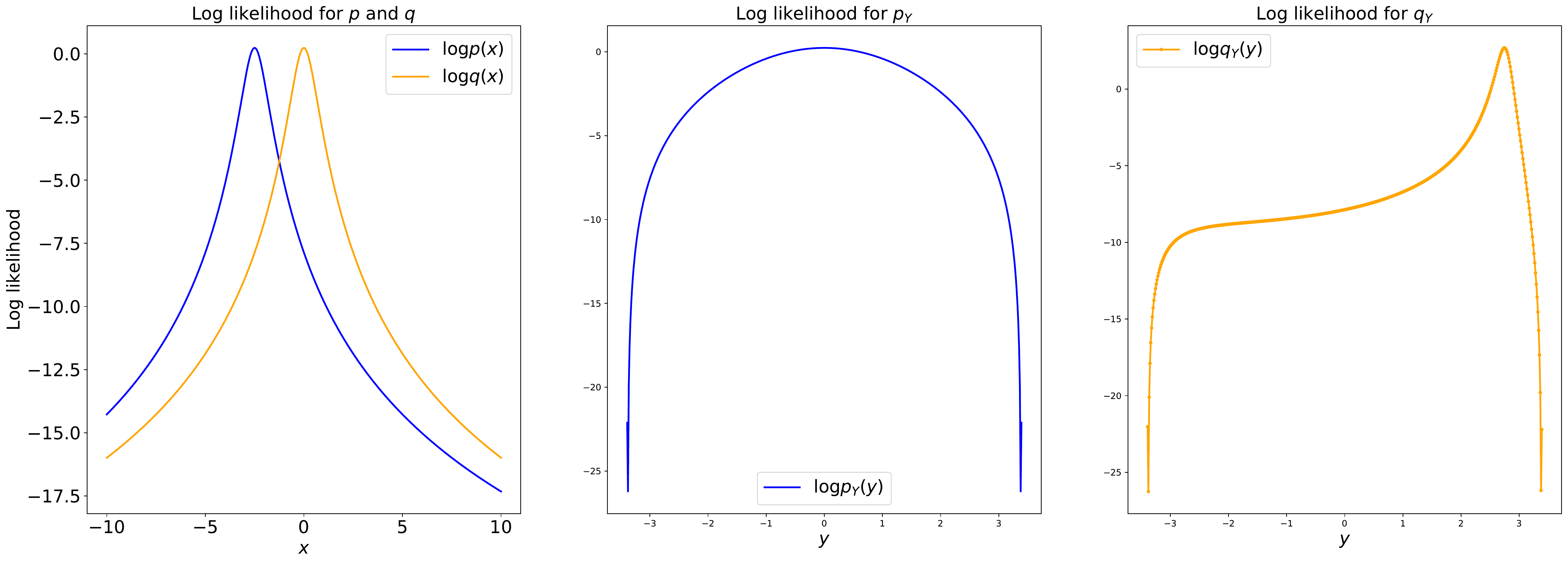}
    \caption{\textbf{Left}: The log likelihood plot for original densities $q$, $p$. \textbf{Middle}: The log likelihood function for transformed density $p_Y$ \textbf{Right}: The log likelihood function for $q_Y$. We choose $\theta = -2.5$ and $b=0.6$. Notice that the transformed densities $p_Y$, $q_Y$ are periodic as we consider $y\in\mathbb{R}$. This won't happen if we consider $\bm{y}=\bm{T}(\x)$. Because all $\x$ value will be squeezed inside the period containing $0$, i.e. $\bm{y}$ will inside $[-3.37,3.37]$ in this case. }
    \label{fig:transformed density}
\end{figure*}

From the motivating example and theorem \ref{thm: DSM}, we know $\bm{m}(\x) = (1+\frac{(\x-\theta)^2}{b})$. Therefore, by simple calculus, the corresponding transformation $y=\bm{T}(\bm{x})$ can be defined as 
\begin{equation}
    \begin{split}
        \y&=\bm{T}(\x) = \frac{1}{b\sqrt{b}}\tan^{-1}(\frac{\x-\theta}{\sqrt{b}})\\
        \x &= \bm{T}^{-1}(\y) = \sqrt{b}\tan(b\sqrt{b}\y)+\theta
    \end{split}
\end{equation}
Let's define the transformed densities $p_Y(\y)$ and $q_Y(\y)$ as 
\begin{equation}
    \begin{split}
        p_Y(\y) &= p(\bm{T}^{-1}(\y))\left|\nabla_{\y}\bm{T}^{-1}(\y)\right|\\
        q_Y(\y) &= q(\bm{T}^{-1}(\y))\left|\nabla_{\y}\bm{T}^{-1}(\y)\right|
    \end{split}
\end{equation}
Therefore, we can plot the log likelihood for the original densities $p$,$q$ and transformed densities $p_Y$, $q_Y$ as Figure \ref{fig:transformed density}. In this case, we set $p(\x)$ has the mean $-2.5$ with the same scale $0.3$ as $q$, whereas $q$ has mean $0$. For the transformation $\bm{T}$, we set $\theta=-2.5$ with $b=0.6$.

\section{Robustness of Gaussian flow}
\label{appendix: robustness}
\wg{Here, we investigate the robustness of the diffusion matrix w.r.t. the degree-of-freedom (DoF) of studnet-t distribution. We adopt the similar settings as the motivating example (Section \ref{subsec: motivating example}) where $q$ and $p_\theta$ are Student-t distribution with same scale parameter. We vary their DoF together to investigate the changes in DSM loss. Because the manual flow lacks a proper interpretation, so it is difficult to adapted to the change of DoF. Thus, we use the same $\bm{m}(\bm{x})=1+\frac{(\bm{x}-\theta)^2}{0.6}$ for all DoF. On the other hand, Gaussian flow is designed to transform from Student-t to standard Gaussian. So it can be easily adapted to the change of DoF by using the corresponding $F_\theta$.  

Figure \ref{fig: Robustness} plots the DSM losses with both manual flow and Gaussian flow. From the top panel, we can clearly observe that manual flow only works with $\text{DoF}=5$. For other DoF, the corresponding DSM fails to recover the ground truth $\theta$. On the other hand, the bottom panel shows that Gaussian flow is robust to the change of DoF, and consistently gives the correct ground truth $\theta$ with wide fast convergence region.

We emphasize again that for both flows, the resulting $DSM_m(q, p_\theta)$ is not equivalent to $\mathcal{F}_m(q, p_\theta)$ (Eq.~\ref{eq: DSM}) as $C_{q, m}$ in Eq.~\ref{eq: DSM_expand} is dependent on $\theta$. However, unlike the manually designed flow by \citet{barp2019minimum}, the Gaussian flow returns surrogate losses that have only one global optimum at the desired solution in all cases considered. Future work will evaluate the Gaussian flow construction of DSM objectives beyond student-t case for further understandings.

}
\begin{figure}[b]
    \centering
    \includegraphics[scale=0.27]{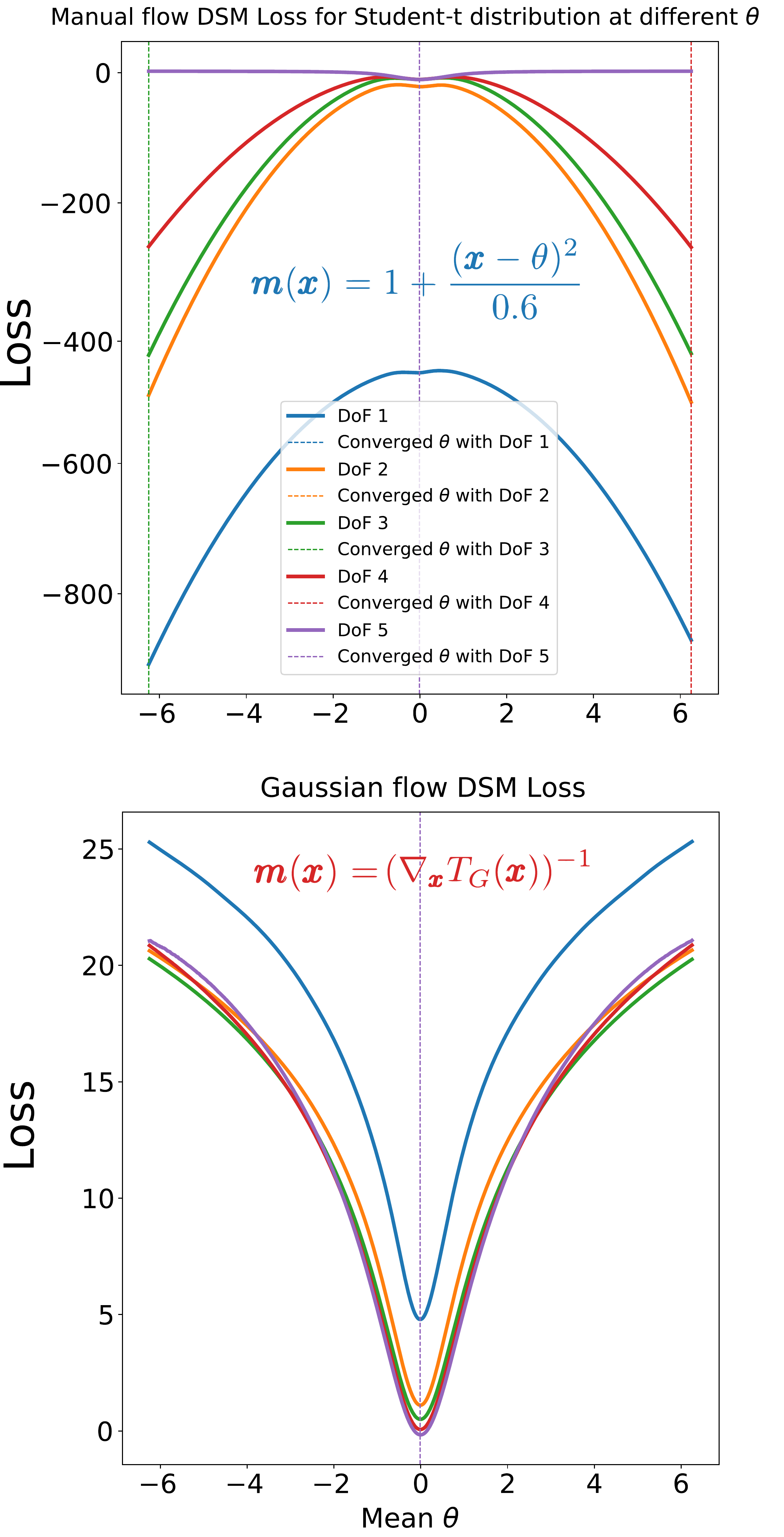}
    \caption{The $DSM_m(q, p_\theta)$ losses using $p_\theta$, $q$ with different degree-of-freedom (DoF). They are plotted when $m(\bm{x})$ is constructed using the manual flow (top) or Gaussian flow (bottom).}
    \label{fig: Robustness}
\end{figure}

\end{document}